\def\eqref#1{equation~\ref{#1}}
\def\1{\bm{1}}
\DeclareMathAlphabet{\mathsfit}{\encodingdefault}{\sfdefault}{m}{sl}
\SetMathAlphabet{\mathsfit}{bold}{\encodingdefault}{\sfdefault}{bx}{n}
\theoremstyle{definition}
\newtheorem{defn}{Definition}[section]
\newtheorem{thm}{Theorem}[section]
\newtheorem{prop}{Proposition}[section]
\title{Identifying through Flows \\for Recovering Latent Representations}
\author{Shen Li\\
Institute of Data Science \&
NUS Graduate School for Integrative Sciences and Engineering\\ 
National University of Singapore\\
\texttt{shen.li@u.nus.edu} \\
\AND
Bryan Hooi \ \ \& \ Gim Hee Lee \\ 
Department of Computer Science, National University of Singapore\\
\texttt{\{bhooi,gimhee.lee\}@comp.nus.edu.sg}\\
}
\begin{document}

\maketitle

\begin{abstract}

Identifiability, or recovery of the true latent representations from which the observed data originates, is \emph{de facto} a fundamental goal of representation learning. Yet, most deep generative models do not address the question of identifiability, and thus fail to deliver on the promise of the recovery of the true latent sources that generate the observations. Recent work proposed identifiable generative modelling using variational autoencoders (iVAE) with a theory of identifiability. Due to the intractablity of KL divergence between variational approximate posterior and the true posterior, however, iVAE has to maximize the evidence lower bound (ELBO) of the marginal likelihood, leading to suboptimal solutions in both theory and practice. In contrast, we propose an identifiable framework for estimating latent representations using a flow-based model (iFlow). Our approach directly maximizes the marginal likelihood, allowing for theoretical guarantees on identifiability, thereby dispensing with variational approximations. We derive its optimization objective in analytical form, making it possible to train iFlow in an end-to-end manner. Simulations on synthetic data validate the correctness and effectiveness of our proposed method and demonstrate its practical advantages over other existing methods.

\end{abstract}

\section{Introduction}

A fundamental question in representation learning relates to identifiability: under which condition is it possible to recover the true latent representations that generate the observed data? Most existing likelihood-based approaches for deep generative modelling, such as Variational Autoencoders (VAE)~\citep{kingma2013auto} and flow-based models~\citep{kobyzev2019normalizing}, focus on performing latent-variable inference and efficient data synthesis, but do not address the question of identifiability, i.e. recovering the true latent representations. 

The question of identifiability is closely related to the goal of learning {\it disentangled} representations \citep{bengio2013representation}. While there is no canonical definition for this term, we adopt the one where individual latent units are sensitive to changes in single generative factors while being relatively invariant to nuisance factors \citep{bengio2013representation}. A good representation for human faces, for example, should encompass different latent factors that separately encode different attributes including gender, hair color, facial expression, etc. By aiming to recover the true latent representation, identifiable models also allow for principled disentanglement; this suggests that rather than being entangled in disentanglement learning in a completely unsupervised manner, we go a step further towards identifiability, since existing literature on disentangled representation learning, such as $\beta$-VAE~\citep{higgins2017beta}, $\beta$-TCVAE~\citep{chen2018isolating}, DIP-VAE~\citep{kumar2017variational} and FactorVAE~\citep{kim2018disentangling}, are neither general endeavors to achieve identifiability; nor do they provide theoretical guarantees on recovering the true latent sources. 

Recently, \cite{khemakhem2019variational} introduced a theory of identifiability for deep generative models, based upon which the authors proposed an identifiable variant of VAEs called iVAE, to learn the distribution over latent variables in an identifiable manner. However, the downside of learning such an identifiable model within the VAE framework lies in the intractability of KL divergence between the approximate posterior and the true posterior. Consequently, in both theory and practice, iVAE inevitably leads to a suboptimal solution, which renders the learned model far less identifiable.

In this paper, aiming at avoiding such a pitfall, we propose to learn an identifiable generative model through flows (short for normalizing flows~\citep{tabak2010density,rezende2015variational}). A normalizing flow is a transformation of a simple probability distribution (e.g. a standard normal) into a more complex probability distribution by a composition of a series of invertible and differentiable mappings~\citep{kobyzev2019normalizing}. Hence, they can be exploited to effectively model complex probability distributions. In contrast to VAEs relying on variational approximations, flow-based models allow for latent-variable inference and likelihood evaluation in an exact and efficient manner, making themselves a perfect choice for achieving identifiability. 

To this end, unifying identifiablity with flows, we propose iFlow, a framework for deep latent-variable models which allows for recovery of the true latent representations from which the observed data originates. We demonstrate that our flow-based model makes it possible to directly maximize the conditional marginal likelihood and thus achieves identifiability in a rigorous manner. We provide theoretical guarantees on the recovery of the true latent representations, and show experiments on synthetic data to validate the theoretical and practical advantages of our proposed formulation over prior approaches. 

\section{Background}
An enduring demand in statistical machine learning is to develop probabilistic models that explain the generative process that produce observations. Oftentimes, this entails estimation of density that can be arbitrarily complex. As one of the promising tools, Normalizing Flows are a family of generative models that fit such a density of exquisite complexity by pushing an initial density (base distribution) through a series of transformations.
Formally, let $\mathbf{x} \in \mathcal{X} \subseteq \mathbb{R}^{n}$ be an observed random variable, and $\mathbf{z} \in \mathcal{Z} \subseteq \mathbb{R}^{n}$ a latent variable obeying a base distribution $p_{Z}(z)$. A normalizing flow $\mathbf{f}$ is a diffeomorphism (i.e an invertible differentiable transformation with differentiable inverse) between two topologically equivalent spaces $\mathcal{X}$ and $\mathcal{Z}$ such that $\mathbf{x}=\mathbf{f}(\mathbf{z})$. Under these conditions, the density of $\mathbf{x}$ is well-defined and can be obtained by using the change of variable formula:
\begin{equation}
p_X(\mathbf{x}) 
= p_Z({\mathbf{h(x)}}) \left\vert \det\biggl(\frac{\partial \mathbf{h}}{\partial \mathbf{x}}\biggr) \right\vert 
= p_Z(\mathbf{z}) {\left\vert \det\biggl(\frac{\partial \mathbf{f}}{\partial \mathbf{z}}\biggr) \right\vert}^{-1},
\end{equation}
where $\mathbf{h}$ is the inverse of $\mathbf{f}$.
To approximate an arbitrarily complex nonlinear invertible bijection, one can compose a series of such functions, since the composition of invertible functions is also invertible, and its Jacobian determinant is the product of the individual functions' Jacobian determinants. 
Denote $\boldsymbol{\phi}$ as the diffeomorphism's learnable parameters. Optimization can proceed as follows by maximizing log-likelihood for the density estimation model:
\begin{equation}
\boldsymbol{\phi}^{*} = \arg\max_{\boldsymbol{\phi}} \mathbb{E}_{\mathbf{x}} \biggl[ \log p_{Z}(\mathbf{h}(\mathbf{x}; \boldsymbol{\phi})) + {\log \left\vert \det \biggl(\frac{\partial \mathbf{h} (\mathbf{x}; \boldsymbol{\phi})}{\partial \mathbf{x}} \biggr) \right\vert} \biggr].
\end{equation}

\section{Related Work}
\label{gen_inst}

\textbf{Nonlinear ICA} \quad Nonlinear independent component analysis (ICA) is one of the biggest problems remaining unresolved in unsupervised learning. Given the observations alone, it aims to recover the inverse mixing function as well as their corresponding independent sources. In contrast with the linear case, research on nonlinear ICA is hampered by the fact that without auxiliary variables, recovering the independent latents is impossible \citep{hyvarinen1999nonlinear}. Similar impossibility result can be found in \citep{locatello2018challenging}. Fortunately, by exploiting additional temporal structure on the sources, recent work \citep{hyvarinen2016unsupervised, hyvarinen2018nonlinear} established the first identifiability results for deep latent-variable models. These approaches, however, do not explicitly learn the data distribution; nor are they capable of generating ``fake" data.

\cite{khemakhem2019variational} bridged this gap by establishing a principled connection between VAEs and an identifiable model for nonlinear ICA. Their method with an identifiable VAE (known as iVAE) approximates the true joint distribution over observed and latent variables under mild conditions. Nevertheless, due to the intractablity of KL divergence between variational approximate posterior and the true posterior, iVAE maximizes the evidence lower bound on the data log-likelihood, which in both theory and practice acts as a detriment to the achievement of identifiability. 

We instead propose \emph{identifying through flows} (normalizing flow), which maximizes the likelihood in a straightforward way, providing theoretical guarantees and practical advantages for identifiability.

\textbf{Normalizing Flows} \quad Normalizing Flows are a family of generative approaches that fits a data distribution by learning a bijection from observations to latent codes, and vice versa. Compared with VAEs which learn a posterior approximation to the true posterior, normalizing flows directly deal with marginal likelihood with exact inference while maintaining efficient sampling. Formally, a normalizing flow is a transform of a tractable probability distribution into a complex distribution by compositing a sequence of invertible and differentiable mappings. In practice, the challenge lies in designing a normalizing flow that satisfies the following conditions: (1) it should be bijective and thus invertible; (2) it is efficient to compute its inverse and its Jacobian determinant while ensuring sufficient capabilities.

The framework of normalizing flows was first defined in \citep{tabak2010density} and \citep{tabak2013family} and then explored for density estimation in \citep{rippel2013high}. \cite{rezende2015variational} applied normalizing flows to variational inference by introducing planar and radial flows. Since then, there had been abundant literature towards expanding this family. \cite{kingma2018glow} parameterizes linear flows with the LU factorization and ``$1\times1$" convolutions for the sake of efficient determinant calculation and invertibility of convolution operations. Despite their limits in expressive capabilities, linear flows act as essential building blocks of affine coupling flows as in \citep{dinh2014nice, dinh2016density}. \cite{kingma2016improved} applied autoregressive models as a form of normalizing flows, which exhibit strong expressiveness in modelling statistical dependencies among variables. However, the forwarding operation of autoregressive models is inherently sequential, which makes it inefficient for training. 
Splines have also been used as building blocks of normalizing flows: \cite{muller2018neural} suggested modelling a linear and quadratic spline as the integral of a univariate monotonic function for flow construction. \cite{durkan2019cubic} proposed a natural extension to the framework of neural importance sampling and also suggested modelling a coupling layer as a monotonic rational-quadratic spine \citep{durkan2019neural}, which can be implemented either with a coupling architecture RQ-NSF(C) or with autoregressive architecture RQ-NSF(AR). 

The expressive capabilities of normalizing flows and their theoretical guarantee of invertibility make them a natural choice for recovering the true mixing mapping from sources to observations, and thus identifiability can be rigorously achieved. In our work, we show that by aligning normalizing flows with an existing identifiability theory, it is desirable to learn an identifiable latent-variable model with theoretical guarantees of identifiability.

\section{Identifiable Flow}
In this section, we first introduce the identifiable latent-variable family and the theory of identifiability \citep{khemakhem2019variational} that makes it possible to recover the joint distribution between observations and latent variables. Then we derive our model, iFlow, and its optimization objective which admits principled disentanglement with theoretical guarantees of identifiability.

\subsection{Identifiable Latent-variable Family}
The primary assumption leading to identifiability is a conditionally factorized prior distribution over the latent variables, $p_\theta (\mathbf{z}|\mathbf{u})$, where $\mathbf{u}$ is an auxiliary variable, which can be the time index in a time series, categorical label, or an additionally observed variable \citep{khemakhem2019variational}.

Formally, let $\mathbf{x} \in \mathcal{X} \subseteq \mathbb{R}^{n}$ and $\mathbf{u} \in \mathcal{U} \subseteq \mathbb{R}^m$ be two observed random variables, and $\mathbf{z} \in \mathcal{Z} \subseteq \mathbb{R}^{n}$ a latent variable that is the source of $\mathbf{x}$. This implies that there can be an arbitrarily complex nonlinear mapping $\mathbf{f}: \mathcal{Z} \to \mathcal{X}$. Assuming that $\mathbf{f}$ is a bijection, it is desirable to recover its inverse by approximating using a family of invertible mappings $\mathbf{h}_{\boldsymbol{\phi}}$ parameterized by $\boldsymbol{\phi}$. The statistical dependencies among these random variables are defined by a Bayesian net: $\mathbf{u} \to \mathbf{z} \to \mathbf{x}$, from which the following conditional generative model can be derived:
\begin{equation}
\label{eq:joint_dist_density}
p(\mathbf{x}, \mathbf{z}|\mathbf{u};\Theta)=p(\mathbf{x}|\mathbf{z};\boldsymbol{\phi})p(\mathbf{z}|\mathbf{u};\mathbf{T}, \boldsymbol{\lambda}),
\end{equation}
where $p(\mathbf{x}|\mathbf{z};\boldsymbol{\phi}) \overset{\underset{\mathrm{def}}{}}{=} p_{{\boldsymbol\epsilon}}(\mathbf{x}-\mathbf{h}^{-1}(\mathbf{z}))$ and $p(\mathbf{z}|\mathbf{u};\mathbf{T}, \boldsymbol{\lambda})$ is assumed to be a factorized exponential family distribution conditioned upon $\mathbf{u}$. Note that this density assumption is valid in most cases, since the exponential families have universal approximation capabilities \citep{sriperumbudur2017density}. Specifically, the probability density function is given by
\begin{equation}
\label{eq:c_loss}
p_{\mathbf{T}, \boldsymbol{\lambda}}(\mathbf{z}|\mathbf{u}) = \prod_{i=1}^n{p_i(\mathbf{z}_i|\mathbf{u})} = \prod_i{\frac{Q_i(\mathbf{z}_i)}{Z_i(\mathbf{u})}}\exp{\Biggl[\sum_{j=1}^k{{T}_{i,j}({z}_i){\lambda}_{i,j}(\mathbf{u})}\Biggr]},
\end{equation}
where $Q_i$ is the base measure, $Z_i(\mathbf{u})$ is the normalizing constant, $T_{i,j}$'s are the components of the sufficient statistic and $\lambda_{i,j}(\mathbf{u})$ the natural parameters, critically depending on $\mathbf{u}$. Note that $k$ indicates the maximum order of statistics under consideration.

\subsection{Identifiability Theory}
The objective of identifiability is to learn a model that is subject to:
\begin{equation}
\label{eq:c_loss}
\text{for each quadruplet }(\Theta, \Theta', \mathbf{x}, \mathbf{z}), p_\Theta(\mathbf{x})=p_{\Theta'}(\mathbf{x}) \implies p_\Theta(\mathbf{x}, \mathbf{z}) = p_{\Theta'}(\mathbf{x}, \mathbf{z}),
\end{equation}
where $\Theta$ and $\Theta'$ are two different choices of model parameters that imply the same marginal density \citep{khemakhem2019variational}. One possible avenue towards this objective is to introduce the definition of identifiability up to equivalence class:
\begin{defn}
\label{def:id}
\textbf{(Identifiability up to equivalence class)} Let $\sim$ be an equivalence relation on $\Theta$. A model defined by $p(\mathbf{x}, \mathbf{z}; \Theta) = p(\mathbf{x} | \mathbf{z}; \Theta)p(\mathbf{z}; \Theta)$ is said to be identifiable up to $\sim$ if
\begin{equation}
p_\Theta(\mathbf{x})=p_{\Theta'}(\mathbf{x}) \implies \Theta \sim \Theta',
\end{equation}
\end{defn}
where such an equivalence relation in the identifiable latent-variable family is defined as follows:
\begin{prop}
\label{prop:affine_trans}
$(\boldsymbol{\phi}, \tilde{\mathbf{T}}, \tilde{\boldsymbol{\lambda}})$ and $(\boldsymbol{\phi'}, \tilde{\mathbf{T}}', \tilde{\boldsymbol{\lambda}}')$ are of the same equivalence class if and only if there exist $\mathbf{A}$ and $\mathbf{c}$ such that $\forall\ \mathbf{x} \in \mathcal{X}$,
\begin{equation}
{\mathbf{T}}(\mathbf{h}_{\boldsymbol{\phi}}(\mathbf{x})) = \mathbf{A}{\mathbf{T}}'({\mathbf{h}_{\boldsymbol{\phi'}}}(\mathbf{x}))+\mathbf{c},
\end{equation}
\end{prop}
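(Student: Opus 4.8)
The plan is to establish the two directions of the ``if and only if'' separately, reading the equivalence relation $\sim$ as the statement that $(\boldsymbol{\phi},\tilde{\mathbf{T}},\tilde{\boldsymbol{\lambda}})$ and $(\boldsymbol{\phi}',\tilde{\mathbf{T}}',\tilde{\boldsymbol{\lambda}}')$ induce the same conditional marginal $p_\Theta(\mathbf{x}\mid\mathbf{u})$. The converse (sufficiency) direction is the easier one: assuming ${\mathbf{T}}(\mathbf{h}_{\boldsymbol{\phi}}(\mathbf{x})) = \mathbf{A}{\mathbf{T}}'({\mathbf{h}_{\boldsymbol{\phi'}}}(\mathbf{x}))+\mathbf{c}$ with $\mathbf{A}$ invertible, I would substitute this relation into the factorized exponential-family prior, push it through the change-of-variables formula, and absorb the Jacobian together with the linear map $\mathbf{A}$ into a redefinition of the natural parameters and normalizing constants, checking that the two conditional densities over $\mathbf{x}$ then coincide. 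This places $\Theta$ and $\Theta'$ in the same class.

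For the forward (necessity) direction, which is the substantive part, I would begin from $p_\Theta(\mathbf{x}\mid\mathbf{u}) = p_{\Theta'}(\mathbf{x}\mid\mathbf{u})$ for almost all $(\mathbf{x},\mathbf{u})$ and first rewrite each marginal as a convolution of the noise density with the pushforward of the prior through $\mathbf{f}=\mathbf{h}^{-1}$:
\begin{equation}
p_\Theta(\mathbf{x}\mid\mathbf{u}) = \int p_{\boldsymbol{\epsilon}}\bigl(\mathbf{x}-\bar{\mathbf{x}}\bigr)\, p_{\mathbf{T},\boldsymbol{\lambda}}\bigl(\mathbf{h}_{\boldsymbol{\phi}}(\bar{\mathbf{x}})\mid\mathbf{u}\bigr)\,\Bigl|\det \tfrac{\partial \mathbf{h}_{\boldsymbol{\phi}}}{\partial \bar{\mathbf{x}}}\Bigr|\, d\bar{\mathbf{x}}.
\end{equation}
Applying the Fourier transform turns the convolution into a product, and under the assumption that the characteristic function of $p_{\boldsymbol{\epsilon}}$ is nonzero almost everywhere I can cancel it, concluding that the two pushforward densities agree pointwise. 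Taking logarithms and inserting the exponential-family form then gives, for every $(\mathbf{x},\mathbf{u})$, an identity of the schematic shape
\begin{equation}
\sum_{i}\log\tfrac{Q_i(\mathbf{h}_{\boldsymbol{\phi},i}(\mathbf{x}))}{Z_i(\mathbf{u})} + \sum_{i,j}T_{i,j}(\mathbf{h}_{\boldsymbol{\phi},i}(\mathbf{x}))\lambda_{i,j}(\mathbf{u}) + \log\Bigl|\det\tfrac{\partial\mathbf{h}_{\boldsymbol{\phi}}}{\partial\mathbf{x}}\Bigr| = (\text{same with primes}).
\end{equation}

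The key manoeuvre is to eliminate every term that does not depend on $\mathbf{u}$. Invoking the assumption that there exist $nk+1$ distinct points $\mathbf{u}_0,\dots,\mathbf{u}_{nk}$ for which the matrix of differences $\boldsymbol{\lambda}(\mathbf{u}_\ell)-\boldsymbol{\lambda}(\mathbf{u}_0)$ has full rank, I would subtract the identity at $\mathbf{u}_0$ from the identity at each $\mathbf{u}_\ell$; the base-measure and Jacobian terms, being $\mathbf{u}$-independent, cancel, leaving a linear system that couples $\mathbf{T}(\mathbf{h}_{\boldsymbol{\phi}}(\mathbf{x}))$ and $\mathbf{T}'(\mathbf{h}_{\boldsymbol{\phi'}}(\mathbf{x}))$ through the stacked natural-parameter differences. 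Multiplying by the inverse of the now-invertible difference matrix isolates $\mathbf{T}(\mathbf{h}_{\boldsymbol{\phi}}(\mathbf{x})) = \mathbf{A}\,\mathbf{T}'(\mathbf{h}_{\boldsymbol{\phi'}}(\mathbf{x})) + \mathbf{c}$, with $\mathbf{A}$ assembled from the inverses and $\mathbf{c}$ collecting the normalizing-constant differences, which is precisely the claimed affine relation.

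I expect the main obstacle to lie in the two middle steps rather than in the bookkeeping: justifying the cancellation of the noise characteristic function (which needs the non-vanishing hypothesis together with enough regularity to take logarithms of the pushforward densities), and guaranteeing that the stacked matrix of natural-parameter differences is genuinely invertible so that the system can be solved for $\mathbf{T}$. The latter rests on the linear independence of the sufficient statistics $T_{i,j}$ and the assumed diversity of the auxiliary variable $\mathbf{u}$; without these the matrix $\mathbf{A}$ need not exist and the affine characterization breaks down.
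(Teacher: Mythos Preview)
You have misread the role of this proposition in the paper. Proposition~4.1 is not a theorem to be proven; it is the \emph{definition} of the equivalence relation~$\sim$ on parameter tuples. The sentence immediately preceding it says so explicitly (``such an equivalence relation in the identifiable latent-variable family is defined as follows''), and the only justification the paper supplies is the remark that ``one can easily verify that $\sim$ is an equivalence relation by showing its reflexivity, symmetry and transitivity.'' That verification is trivial: reflexivity via $\mathbf{A}=\mathbf{I}$, $\mathbf{c}=\mathbf{0}$; symmetry by inverting $\mathbf{A}$; transitivity by composing affine maps.

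Your proposal instead reads $\sim$ as ``inducing the same conditional marginal $p_\Theta(\mathbf{x}\mid\mathbf{u})$'' and then attempts to prove that this coincides with the affine condition. That is not Proposition~4.1; one direction of it is the content of Theorem~4.1, the identifiability result imported from Khemakhem et al. Your forward-direction argument (write the marginal as a convolution, cancel the noise characteristic function via Fourier, subtract the log-density identity at $nk+1$ distinct auxiliary points, invert the resulting natural-parameter difference matrix) is exactly the proof strategy for that theorem, and the paper neither reproduces nor requires it here. Note also that Theorem~4.1 only asserts the implication ``same marginal $\Rightarrow$ affine relation,'' not the biconditional you are aiming for.

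A secondary issue: in your converse direction you propose to ``absorb the Jacobian together with the linear map $\mathbf{A}$ into a redefinition of the natural parameters,'' but $\boldsymbol{\lambda}$ and $\boldsymbol{\lambda}'$ are fixed components of the given parameter tuples, not free to be redefined. The affine relation on $\mathbf{T}\circ\mathbf{h}_{\boldsymbol{\phi}}$ alone does not force $p_\Theta(\mathbf{x}\mid\mathbf{u})=p_{\Theta'}(\mathbf{x}\mid\mathbf{u})$ without a matching constraint on the natural parameters. This point is moot given the misreading above, but it would have been a gap had the biconditional actually been the target.
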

where
\begin{equation}
\begin{split}
\tilde{\mathbf{T}}(\mathbf{z})=(Q_1(z_1),...,Q_n(z_n),T_{1,1}(z_1),...,T_{n,k}(z_n)), \\ \tilde{\boldsymbol{\lambda}}(\mathbf{u})=(Z_1(\mathbf{u}),...,Z_n(\mathbf{u}),\lambda_{1,1}(\mathbf{u}),...,\lambda_{n,k}(\mathbf{u})).
\end{split}
\end{equation}

One can easily verify that $\sim$ is an equivalence relation by showing its reflexivity, symmetry and transitivity. Then, the identifiability of the latent-variable family is given by Theorem~\ref{thm:ivae_thm}~\citep{khemakhem2019variational}.
\begin{thm}
\label{thm:ivae_thm}
Let $\mathcal{Z}=\mathcal{Z}_1\times\cdots\times\mathcal{Z}_n$ and suppose the following holds:
(i) The set $\{\mathbf{x} \in \mathcal{X} | \Psi_{\boldsymbol\epsilon}(\mathbf{x})=0\}$ has measure zero, where $\Psi_{\boldsymbol\epsilon}$ is the characteristic function of the density $p_{\boldsymbol\epsilon}$;
(ii) The sufficient statistics ${T}_{i,j}$ in (2) are differentiable almost everywhere and $\partial{T_{i,j}}/\partial{z} \ne 0$ almost surely for $z \in \mathcal{Z}_i$ and for all $i \in \{1, ..., n\}$ and $j \in \{1, ..., k\}$.
(iii) There exist ($nk+1$) distinct priors $\mathbf{u}^0, ..., \mathbf{u}^{nk}$ such that the matrix
\begin{equation}
L =
\begin{bmatrix}
{\lambda}_{1,1}(\mathbf{u}^1)-{\lambda}_{1,1}(\mathbf{u}^0) & \cdots & {\lambda}_{1,1}(\mathbf{u}^{nk})-{\lambda}_{1,1}(\mathbf{u}^0) \\
\vdots & \ddots & \vdots \\
{\lambda}_{n,k}(\mathbf{u}^1)-{\lambda}_{n,k}(\mathbf{u}^0) & \cdots & {\lambda}_{n,k}(\mathbf{u}^{nk})-{\lambda}_{n,k}(\mathbf{u}^0)
\end{bmatrix}
\end{equation}
\end{thm}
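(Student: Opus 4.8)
The plan is to adapt the nonlinear-ICA identifiability argument, proceeding in three stages: remove the observation noise by deconvolution, reduce the resulting pointwise identity to a linear system in the sufficient statistics, and invert that system using the rank hypothesis on $L$. I would start from the hypothesis $p_\Theta(\mathbf{x}|\mathbf{u}) = p_{\Theta'}(\mathbf{x}|\mathbf{u})$ for all $\mathbf{x},\mathbf{u}$. Since $p(\mathbf{x}|\mathbf{z};\boldsymbol\phi) = p_{\boldsymbol\epsilon}(\mathbf{x}-\mathbf{h}^{-1}(\mathbf{z}))$, the change of variable $\mathbf{y}=\mathbf{h}^{-1}(\mathbf{z})$ rewrites the marginal $\int p(\mathbf{x}|\mathbf{z};\boldsymbol\phi)\,p_{\mathbf{T},\boldsymbol\lambda}(\mathbf{z}|\mathbf{u})\,d\mathbf{z}$ as a convolution $(p_{\boldsymbol\epsilon} * \bar p_\Theta(\cdot|\mathbf{u}))(\mathbf{x})$, where $\bar p_\Theta(\mathbf{y}|\mathbf{u}) = p_{\mathbf{T},\boldsymbol\lambda}(\mathbf{h}_{\boldsymbol\phi}(\mathbf{y})|\mathbf{u})\,|\det(\partial\mathbf{h}_{\boldsymbol\phi}/\partial\mathbf{y})|$ is the noise-free prior pushed into $\mathcal{X}$. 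Passing to Fourier transforms turns the hypothesis into $\Psi_{\boldsymbol\epsilon}\,\widehat{\bar p}_\Theta = \Psi_{\boldsymbol\epsilon}\,\widehat{\bar p}_{\Theta'}$; by assumption (i) the zero set of $\Psi_{\boldsymbol\epsilon}$ has measure zero, so I may cancel it and obtain $\bar p_\Theta(\mathbf{y}|\mathbf{u}) = \bar p_{\Theta'}(\mathbf{y}|\mathbf{u})$ for all $\mathbf{y},\mathbf{u}$.

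Next I would take logarithms and substitute the factorized exponential-family form of $p_{\mathbf{T},\boldsymbol\lambda}$, giving for every $\mathbf{y}$ and $\mathbf{u}$
\[
\sum_i \log Q_i(z_i) - \sum_i \log Z_i(\mathbf{u}) + \sum_{i,j} T_{i,j}(z_i)\,\lambda_{i,j}(\mathbf{u}) + \log|\det(\partial\mathbf{h}_{\boldsymbol\phi}/\partial\mathbf{y})| = (\text{same with primed quantities}),
\]
with $\mathbf{z}=\mathbf{h}_{\boldsymbol\phi}(\mathbf{y})$ and $\mathbf{z}'=\mathbf{h}_{\boldsymbol\phi'}(\mathbf{y})$. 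Evaluating at the $nk+1$ points $\mathbf{u}^0,\dots,\mathbf{u}^{nk}$ and subtracting the $\mathbf{u}^0$ equation from each of the remaining $nk$ equations annihilates every $\mathbf{u}$-independent term, in particular the base-measure and Jacobian terms, because $\mathbf{h}_{\boldsymbol\phi}$ does not depend on $\mathbf{u}$. Writing $\mathbf{T}(\mathbf{z})=(T_{1,1}(z_1),\dots,T_{n,k}(z_n))$ and collecting the $nk$ equations into matrix form yields $L^\top \mathbf{T}(\mathbf{z}) = (L')^\top \mathbf{T}'(\mathbf{z}') + \mathbf{b}$, where $L$ is precisely the matrix of hypothesis (iii), $L'$ its primed analogue, and $\mathbf{b}$ the constant vector gathering the $\log Z$ differences.

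Finally, hypothesis (iii) makes $L$ (hence $L^\top$) invertible, so left-multiplying by $(L^\top)^{-1}$ gives $\mathbf{T}(\mathbf{h}_{\boldsymbol\phi}(\mathbf{y})) = \mathbf{A}\,\mathbf{T}'(\mathbf{h}_{\boldsymbol\phi'}(\mathbf{y})) + \mathbf{c}$ with $\mathbf{A}=(L^\top)^{-1}(L')^\top$ and $\mathbf{c}=(L^\top)^{-1}\mathbf{b}$. As $\mathbf{y}$ ranges over all of $\mathcal{X}$, renaming $\mathbf{y}$ as $\mathbf{x}$ reproduces exactly the affine relation of Proposition~\ref{prop:affine_trans}, so $\Theta\sim\Theta'$ and the model is identifiable up to $\sim$ in the sense of Definition~\ref{def:id}. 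Running the same argument with $\Theta$ and $\Theta'$ interchanged yields the reverse affine map; this is where assumption (ii) enters, since the almost-everywhere nonvanishing of $\partial T_{i,j}/\partial z$ forces the sufficient statistics to be genuinely nonconstant and hence $\mathbf{A}$ to be invertible, so that $\sim$ is a bona fide equivalence rather than a degenerate collapse.

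The step I expect to be the main obstacle is the deconvolution: cancelling $\Psi_{\boldsymbol\epsilon}$ rigorously requires controlling integrability so that the Fourier transforms exist, and exploiting that $\Psi_{\boldsymbol\epsilon}$ vanishes only on a measure-zero set rather than nowhere. This stage is the conceptual crux that converts equality of the observable marginals into equality of the latent push-forward densities; once it is in place, the remainder is linear algebra powered by the rank condition on $L$ together with the nondegeneracy supplied by (ii).
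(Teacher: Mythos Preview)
Your proposal is essentially the standard proof of this result, and it is correct in outline. However, note that the present paper does \emph{not} supply its own proof of Theorem~\ref{thm:ivae_thm}: the theorem is quoted verbatim from \cite{khemakhem2019variational} and only invoked, so there is no in-paper argument to compare against. Your three-stage plan (Fourier deconvolution via assumption~(i), log-linearisation and differencing against $\mathbf{u}^0$ to isolate the sufficient statistics, then inversion of $L$ via assumption~(iii)) is exactly the route taken in the cited source. The one place where your sketch is slightly loose is the role of assumption~(ii): in the original argument it is not used merely to certify that $\sim$ is symmetric after the fact, but rather to guarantee that the components of $\mathbf{T}$ are linearly independent as functions, which is what forces the matrix $\mathbf{A}$ obtained from $(L^\top)^{-1}(L')^\top$ to be invertible. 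Apart from that nuance, your plan matches the canonical proof.
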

of size $nk \times nk$ is invertible. Then, the parameters $(\boldsymbol{\phi}, \tilde{\mathbf{T}}, \tilde{\boldsymbol{\lambda}})$ are $\sim$-identifiable.


\subsection{Optimization Objective of iFlow}
We propose \emph{identifying through flows} (iFlow) for recovering latent representations. Our proposed model falls into the identifiable latent-variable family with ${\boldsymbol\epsilon}=\mathbf{0}$, that is, $p_{\boldsymbol\epsilon}(\cdot)=\delta(\cdot)$, where $\delta$ is a point mass, i.e. Dirac measure. Note that assumption~(i) in Theorem~\ref{thm:ivae_thm} holds true for iFlow. In stark contrast to iVAE which resorts to variational approximations and maximizes the evidence lower bound, iFlow directly maximizes the marginal likelihood conditioned on $\mathbf{u}$:
\begin{equation}
\label{eq:c_loss}
\max_{\Theta} p_X(\mathbf{x}|\mathbf{u};\Theta)=p_Z(\mathbf{h}_{\boldsymbol{\phi}}(\mathbf{x})|\mathbf{u};\boldsymbol\theta) \left | \det\biggl(\frac{\partial \mathbf{h}_{\boldsymbol{\phi}}}{\partial \mathbf{x}}\biggr)\right\vert,
\end{equation}
where $p_Z(\cdot|\mathbf{u})$ is modeled by a factorized exponential family distribution. Therefore, the log marginal likelihood is obtained:
\begin{equation}
\label{eq:c_loss}
\log p_X(\mathbf{x}|\mathbf{u};\Theta)= \sum_{i=1}^n{\biggl(\log Q_i(z_i) - \log Z_i(\mathbf{u}) + {\mathbf{T}_i(z_i)}^T\boldsymbol{\lambda}_i(\mathbf{u})\biggr)} + \log \left | \det\biggl(\frac{\partial \mathbf{h}_{\boldsymbol{\phi}}}{\partial \mathbf{x}}\biggr)\right\vert,
\end{equation}
where $z_i$ is the $i$th component of the source $\mathbf{z} = \mathbf{h}_{\boldsymbol{\phi}}(\mathbf{x})$, and $\mathbf{T}$ and $\boldsymbol{\lambda}$ are both $n$-by-$k$ matrices. Here, $\mathbf{h}_{\boldsymbol{\phi}}$ is a normalizing flow of any kind. For the sake of simplicity, we set $Q_i(z_i)=1$ for all $i$'s and consider maximum order of sufficient statistics of $z_i$'s up to 2, that is, $k=2$. Hence, $\mathbf{T}$ and $\boldsymbol{\lambda}$ are given by
\begin{equation}
\label{eq:c_loss}
\mathbf{T}(\mathbf{z})=
\begin{pmatrix}
z_1^2 & z_1 \\
z_2^2 & z_2 \\
\vdots & \vdots \\
z_n^2 & z_n
\end{pmatrix}
\quad\text{and}\quad
\boldsymbol{\lambda}(\mathbf{u})=
\begin{pmatrix}
\xi_1 & \eta_1 \\
\xi_2 & \eta_2 \\
\vdots & \vdots \\
\xi_n & \eta_n
\end{pmatrix}
.
\end{equation}
Therefore, the optimization objective is to minimize
\begin{equation}
\label{eq:c_loss}
\mathcal{L}(\Theta) = \mathbb{E}_{(\mathbf{x}, \mathbf{u}) \sim p_D} {\Biggl [ \biggl(\sum_{i=1}^n{\log Z_i(\mathbf{u})} \biggr) - \text{trace}\biggl(\mathbf{T}(\mathbf{z}) \boldsymbol{\lambda}(\mathbf{u})^T\biggr) - \log \left | \det\biggl(\frac{\partial \mathbf{h}_{\boldsymbol{\phi}}}{\partial \mathbf{x}}\biggr)\right\vert} \Biggr],
\end{equation}
where $p_D$ denotes the empirical distribution, and the first term in (13) is given by
\begin{equation}
\label{eq:c_loss}
\begin{split}
\sum_{i=1}^n{\log Z_i(\mathbf{u})} 
& = \log \int_{\mathbb{R}^n} \biggl(\prod_{i=1}^n{Q_i(z_i)}\biggr) \exp{\biggl( \text{trace}\biggl(\mathbf{T}(\mathbf{z}) \boldsymbol{\lambda}(\mathbf{u})^T\biggr)\biggr)} \mathrm{d}\mathbf{z} \\
& = \log \int_{\mathbb{R}^n} \exp{\biggl( \sum_{i=1}^n {\xi_i z_i^2 + \eta_i z_i} \biggr)} \mathrm{d}\mathbf{z} \\
& = \log \prod_{i=1}^n \int_\mathbb{R}\exp{(\xi_i z_i^2 + \eta_i z_i)} \mathrm{d}z_i \\
& = \log \prod_{i=1}^n {\biggl(\sqrt{- {\pi \over \xi_i}} \biggr) \exp{\biggl( - {{\eta_i^2} \over {4\xi_i}} \biggr)}} \\
& = \sum_{i=1}^n{\biggl(\log{\sqrt{- {\pi \over \xi_i}}} - {{\eta_i^2} \over {4\xi_i}} \biggr)}.
\end{split}
\end{equation}
In practice, $\boldsymbol{\lambda}(\mathbf{u})$ can be parameterized by a multi-layer perceptron with learnable parameters $\boldsymbol{\theta}$, where $\boldsymbol{\lambda}_{\boldsymbol{\theta}}: \mathbb{R}^m \to \mathbb{R}^{2n}$. Here, $m$ is the dimension of the space in which $\mathbf{u}$'s lies. Note that $\xi_i$ should be strictly negative in order for the exponential family's probability density function to be finite. Negative softplus nonlinearity can be exploited to force this constraint. Therefore, optimization proceeds by minimizing the following closed-form objective:
\begin{equation}
\label{eqn:opt_obj}
\min_{\Theta} \mathcal{L}(\Theta) = \mathbb{E}_{(\mathbf{x}, \mathbf{u}) \sim p_D} {\Biggl [ \sum_{i=1}^n{\biggl(\log{\sqrt{- {\pi \over \xi_i}}}  - {{\eta_i^2} \over {4\xi_i}} \biggr)} - \text{trace}\biggl(\mathbf{T}(\mathbf{z}) \boldsymbol{\lambda}_{\boldsymbol\theta}(\mathbf{u})^T\biggr) - \log \left | \det\biggl(\frac{\partial \mathbf{h}_{\boldsymbol{\phi}}}{\partial \mathbf{x}}\biggr)\right\vert}\Biggr].
\end{equation}
where $\Theta = \{\boldsymbol\theta, \boldsymbol{\phi}\}$.

\subsection{Identifiability of iFlow}
The identifiability of our proposed model, iFlow, is characterized by Theorem~\ref{thm:iflow_id}.
\begin{thm}
\label{thm:iflow_id}
Minimizing $\mathcal{L}_{\Theta}$ with respect to $\Theta$, in the limit of infinite data, learns a model that is $\sim$-identifiable.
\end{thm}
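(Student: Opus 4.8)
The plan is to reduce Theorem~\ref{thm:iflow_id} to the density-matching identifiability result of Theorem~\ref{thm:ivae_thm}. The first observation is that the objective in~(\ref{eqn:opt_obj}) is, by construction, nothing but the negative conditional log-likelihood, $\mathcal{L}(\Theta) = -\,\mathbb{E}_{(\mathbf{x},\mathbf{u})\sim p_D}\bigl[\log p_X(\mathbf{x}\mid\mathbf{u};\Theta)\bigr]$, because the bracketed terms are exactly the negation of the log marginal likelihood together with the closed-form log-normalizer $\sum_i \log Z_i(\mathbf{u})$ computed in the preceding display. Hence minimizing $\mathcal{L}(\Theta)$ coincides with maximizing the expected conditional log-likelihood. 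The argument then has three stages: (a) pass to the infinite-data limit, replacing the empirical $p_D$ by the true data distribution $\pdata$; (b) show the global minimizer matches the true conditional density; and (c) invoke Theorem~\ref{thm:ivae_thm} to upgrade density-matching to $\sim$-identifiability of the parameters.

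For stages (a)--(b), I would add and subtract the parameter-free conditional entropy of $\pdata$ to rewrite the population objective as $\mathcal{L}(\Theta) = \mathbb{E}_{\mathbf{u}}\bigl[\KL\bigl(\pdata(\mathbf{x}\mid\mathbf{u})\,\big\|\,p_\Theta(\mathbf{x}\mid\mathbf{u})\bigr)\bigr] + \text{const}$ in the infinite-data limit. Assuming the model is well specified, i.e. the observations are generated by some member of the family~(\ref{eq:joint_dist_density}) with true parameter $\Theta^{*}$, the KL term is nonnegative and vanishes exactly when $p_\Theta(\mathbf{x}\mid\mathbf{u}) = p_{\Theta^{*}}(\mathbf{x}\mid\mathbf{u})$ for almost every $(\mathbf{x},\mathbf{u})$. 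Therefore any global minimizer $\hat{\Theta}$ of $\mathcal{L}$ satisfies $p_{\hat{\Theta}}(\mathbf{x}\mid\mathbf{u}) = p_{\Theta^{*}}(\mathbf{x}\mid\mathbf{u})$, which is precisely the antecedent $p_\Theta(\mathbf{x})=p_{\Theta'}(\mathbf{x})$ of Definition~\ref{def:id}, read conditionally on $\mathbf{u}$.

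For stage (c), I would check that iFlow satisfies the three hypotheses of Theorem~\ref{thm:ivae_thm}. Assumption~(i) holds because $p_{\boldsymbol\epsilon}=\delta$ has characteristic function $\Psi_{\boldsymbol\epsilon}\equiv 1$, whose zero set is empty and hence of measure zero, as already noted in the text. Assumption~(ii) holds for the chosen statistics with $k=2$: the entries $T_{i,1}(z)=z^{2}$ and $T_{i,2}(z)=z$ are everywhere differentiable, with $\partial T_{i,2}/\partial z = 1 \neq 0$ and $\partial T_{i,1}/\partial z = 2z \neq 0$ for $z\neq 0$, i.e. almost surely on each $\mathcal{Z}_i$. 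Assumption~(iii), invertibility of the $nk\times nk$ matrix $L$ of natural-parameter differences, I would carry as the standing richness condition on $\mathbf{u}$ and on the learned map $\boldsymbol{\lambda}_{\boldsymbol\theta}$; it holds generically as soon as $\mathbf{u}$ realizes at least $nk+1$ sufficiently diverse values. With (i)--(iii) in force, Theorem~\ref{thm:ivae_thm} gives $\hat{\Theta}\sim\Theta^{*}$, meaning the recovered $(\boldsymbol{\phi},\tilde{\mathbf{T}},\tilde{\boldsymbol{\lambda}})$ differs from the true one only by the affine reparameterization of Proposition~\ref{prop:affine_trans}; this is exactly $\sim$-identifiability.

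The step that deserves the most care, and that I expect to be the main obstacle, is stage~(b): guaranteeing that the global minimizer genuinely attains zero KL. This rests on two hypotheses that must be made explicit --- realizability, so that the infimum of the KL is truly zero within the iFlow family, and attainment of the global optimum, since the theorem's phrase ``minimizing $\mathcal{L}$'' should be read as reaching the global minimizer rather than an arbitrary stationary point of a non-convex landscape. A subsidiary check is that the noiseless choice $\boldsymbol{\epsilon}=\mathbf{0}$ still yields a bona fide density $p_X(\mathbf{x}\mid\mathbf{u};\Theta)$ via the change-of-variables formula; this is immediate because $\mathbf{h}_{\boldsymbol{\phi}}$ is a diffeomorphism, and it is precisely this exactness that eliminates the variational gap afflicting iVAE.
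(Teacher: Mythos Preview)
Your proposal is correct and follows essentially the same approach as the paper's proof: recognize $\mathcal{L}$ as the negative conditional log-likelihood, argue that its population minimizer matches the true conditional density, and then invoke Theorem~\ref{thm:ivae_thm}. Your treatment is in fact more thorough than the paper's---you make the KL decomposition explicit, verify assumptions (i)--(iii) of Theorem~\ref{thm:ivae_thm} individually, and flag the realizability and global-optimum caveats that the paper's short proof leaves implicit.
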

\begin{proof}
Minimizing $\mathcal{L}_{\Theta}$ with respect to $\Theta$ is equivalent to maximizing the log conditional likelihood, $\log p_X(\mathbf{x}|\mathbf{u};\Theta)$. 
Given infinite amount of data, maximizing $\log p_X(\mathbf{x}|\mathbf{u};\Theta)$ will give us the true marginal likelihood conditioned on $\mathbf{u}$, that is, $p_X(\mathbf{x}|\mathbf{u};\hat\Theta)=p_X(\mathbf{x}|\mathbf{u};{\Theta}^*)$, where $\hat\Theta = \arg\max_{\Theta}{\log p_X(\mathbf{x}|\mathbf{u};\Theta)}$ and $\Theta^*$ is the true parameter. According to Theorem~\ref{thm:ivae_thm}, we obtain that $\hat\Theta$ and $\Theta^*$ are of the same equivalence class defined by $\sim$. Thus, according to Definition~\ref{def:id}, the joint distribution parameterized by $\Theta$ is identifiable up to $\sim$.
\end{proof}
Consequently, Theorem~\ref{thm:iflow_id} guarantees strong identifiability of our proposed generative model, iFlow. Note that unlike Theorem 3 in \citep{khemakhem2019variational}, Theorem~\ref{thm:iflow_id} makes no assumption that the family of approximate posterior distributions contains the true posterior. And we show in experiments that this assumption is unlikely to hold true empirically.

\section{Simulations}
\label{others}
To evaluate our method, we run simulations on a synthetic dataset. This section will elaborate on the details of the generated dataset, implementation, evaluation metric and fair comparison with existing methods.

\subsection{Dataset}


We generate a synthetic dataset where the sources are non-stationary Gaussian time-series, as described in~\citep{khemakhem2019variational}: the sources are divided into $M$ segments of $L$ samples each. The auxiliary variable $\mathbf{u}$ is set to be the segment index. For each segment, the conditional prior distribution is chosen from the exponential family (4), where $k=2$, $Q_i(z_i)=1$, and ${T}_{i,1}(z_i)=z_i^2$, ${T}_{i,2}(z_i)=z_i$, and the true $\lambda_{i,j}$'s are randomly and independently generated across the segments and the components such that their variances obey a uniform distribution on $[0.5, 3]$. The sources to recover are mixed by an invertible multi-layer perceptron (MLP) whose weight matrices are ensured to be full rank.

\subsection{Implementation Details}
The mapping $\boldsymbol\lambda_{\boldsymbol\theta}$ that outputs the natural parameters of the conditional factorized exponential family distribution is parameterized by a multi-layer perceptron with the activation of the last layer being the softplus nonlinearity. Additionally, a negative activation is taken on the second-order natural parameters in order to ensure its finiteness. The bijection $\mathbf{h}_{\boldsymbol{\phi}}$ is modeled by RQ-NSF(AR)~\citep{durkan2019neural} with the flow length of 10 and the bin 8, which gives rise to sufficient flexibility and expressiveness. 
For each training iteration, we use a mini-batch of size 64, and an Adam optimizer with learning rate chosen in $\{0.01, 0.001\}$ to optimize the learning objective (15). 

\subsection{Evaluation Metric}

As a standard measure used in ICA, the mean correlation coefficient (MCC) between the original sources and the corresponding predicted latents is chosen to be the evaluation metric. A high MCC indicates the strong correlation between the identified latents recovered and the true sources. In experiments, we found that such a metric can be sensitive to the synthetic data generated by different random seeds. We argue that unless one specifies the overall generating procedure including random seeds in particular any comparison remains debatable. This is crucially important since most of the existing works failed to do so. Therefore, we run each simulation of different methods through seed 1 to seed 100 and report averaged MCCs with standard deviations, which makes the comparison fair and meaningful.


\subsection{Comparison and Results}
We compare our model, iFlow, with iVAE. These two models are trained on the same synthetic dataset aforementioned, with $M=40$, $L=1000$, $n=d=5$. For visualization, we also apply another setting with $M=40$, $L=1000$, $n=d=2$. To evaluate iVAE's identifying performance, we use the original implementation that is officially released\footnote{https://github.com/ilkhem/iVAE/} with exactly the same settings as described in \citep{khemakhem2019variational} (cf. Appendix~\ref{ivae_imple_appendix}).


\begin{figure}[t] 
\label{vis}
\centering
\subfigure[]{
\label{vis_a}
\includegraphics[width=0.47\textwidth]{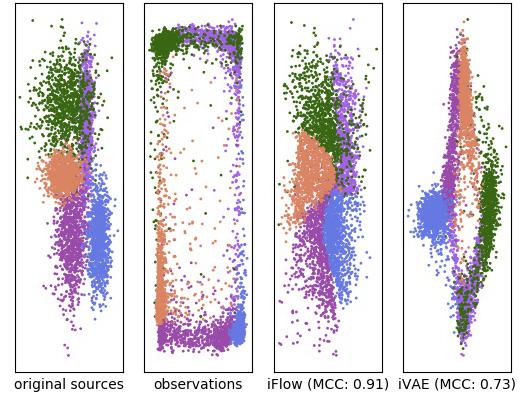}
}
\subfigure[]{
\label{vis_b}
\includegraphics[width=0.47\textwidth]{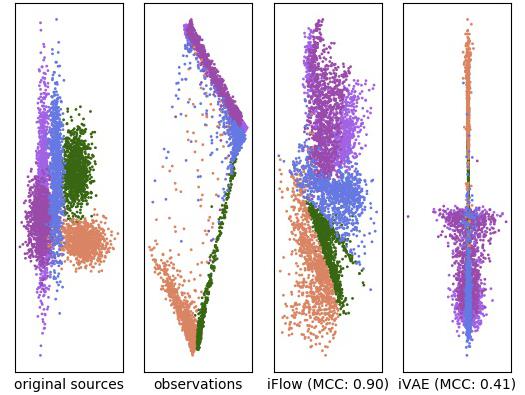}
}
\subfigure[]{
\label{vis_c}
\includegraphics[width=0.47\textwidth]{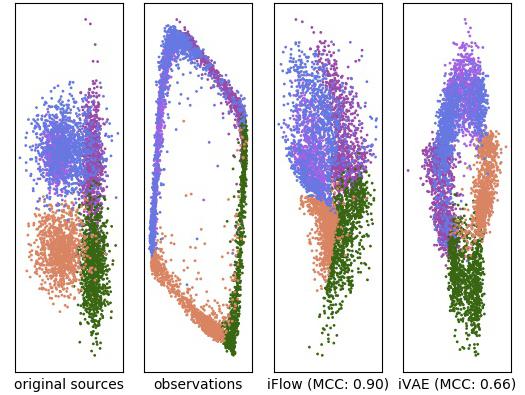}
}
\subfigure[]{
\label{vis_d}
\includegraphics[width=0.47\textwidth]{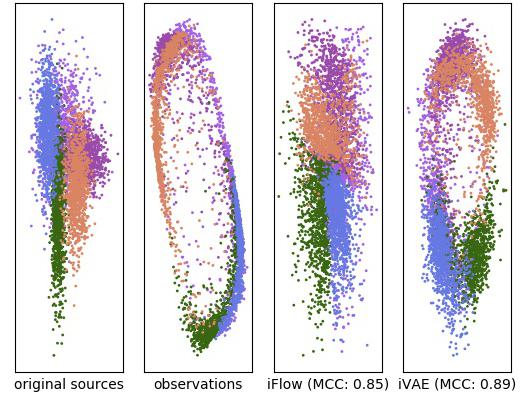}
}
\caption{Visualization of 2-D cases (best viewed in color).}
\end{figure}
\begin{figure}[t] 
\label{comp}
\centering
\subfigure[]{
\label{comp_mcc}
\includegraphics[width=0.48\textwidth]{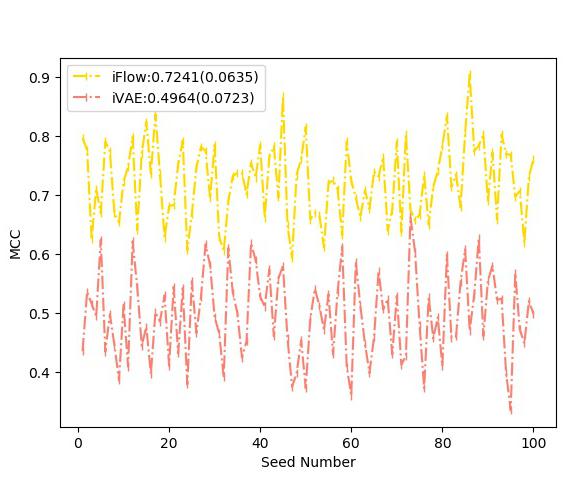}
}
\subfigure[]{
\label{comp_ene}
\includegraphics[width=0.48\textwidth]{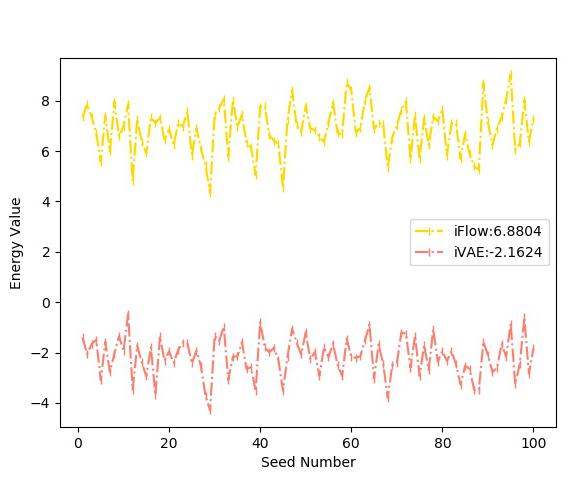}
}
\caption{Comparison of identifying performance (MCC) and the energy value (likelihood in logarithm) versus seed number, respectively.}
\end{figure}

\begin{figure}[t] 
\label{dimwise_comp}
\includegraphics[width=\textwidth]{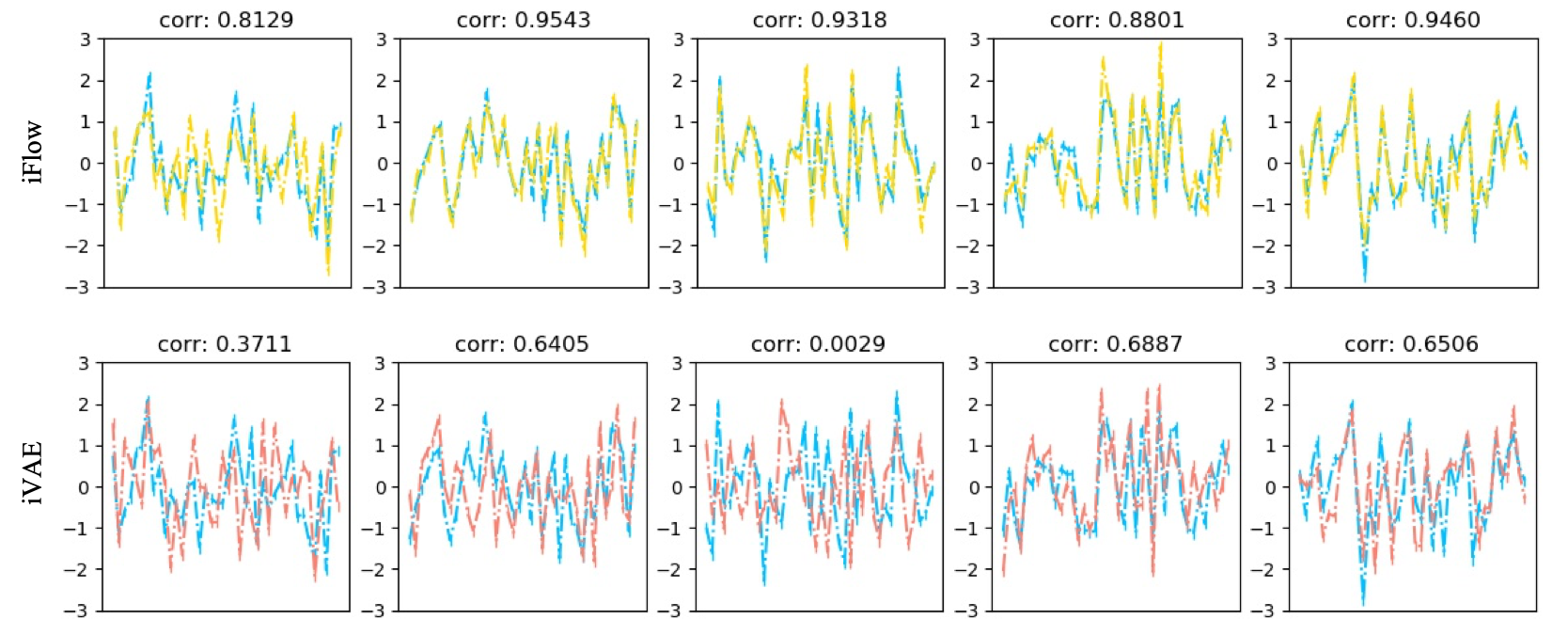}
\caption{Comparison of identifying performance (correlation coefficient) in each single dimension of the latent space, respectively. The dashed cyan line represents the source signal.}
\end{figure}

First, we demonstrate a visualization of identifiablity of these two models in a 2-D case ($n=d=2$) as illustrated in Figure 1, in which we plot the original sources (latent), observations and the identified sources recovered by iFlow and iVAE, respectively. Segments are marked with different colors. Clearly, iFlow outperforms iVAE in identifying the original sources while preserving the original geometry of source manifold. It is evident that the latents recovered by iFlow bears much higher resemblance to the true latent sources than those by iVAE in the presence of some trivial indeterminacies of scaling, global sign and permutation of the original sources, which are inevitable even in some cases of linear ICA. This exhibits consistency with the definition of identifiability up to equivalence class that allows for existence of an affine transformation between sufficient statistics, as described in Proposition~\ref{prop:affine_trans}. As shown in Figure~\ref{vis_a}, \ref{vis_c}, and \ref{vis_d}, iVAE achieves inferior identifying performance in the sense that its estimated latents tend to retain the manifold of the observations. Notably, we also find that despite the relatively high MCC performance of iVAE in Figure~\ref{vis_d}, iFlow is much more likely to recover the true geometric manifold in which the latent sources lie.
In Figure \ref{vis_b}, iVAE's recovered latents collapses in face of a highly nonlinearly mixing case, while iFlow still works well in identifying the true sources. Note that these are not rare occurrences. More visualization examples can be found in Appendix~\ref{vis_appendix}.

Second, regarding quantitative results as shown in Figure~\ref{comp_mcc}, our model, iFlow, consistently outperforms iVAE in MCC by a considerable margin across different random seeds under consideration while experiencing less uncertainty (standard deviation as indicated in the brackets). Moreover, Figure~\ref{comp_ene} also showcases that the energy value of iFlow is much higher than that of iVAE, which serves as evidence that the optimization of the evidence lower bound, as in iVAE, would lead to suboptimal identifiability. As is borne out empirically, the gap between the evidence lower bound and the conditional marginal likelihood is inevitably far from being negligible in practice. For clearer analysis, we also report the correlation coefficients for each source-latent pair in each dimension. As shown in Figure 3, iFlow exhibits much stronger correlation than does iVAE in each single dimension of the latent space.

Finally, we investigate the impact of different choices of activation for generating natural parameters of the exponential family distribution (see Appendix~\ref{ablation_appendix} for details). Note that all of these choices are valid since theoretically the natural parameters form a convex space. However, iFlow(Softplus) achieves the highest identifying performance, suggesting that the range of softplus allows for greater flexibility, which makes itself a good choice for natural parameter nonlinearity.

\section{Conclusions}
Among the most significant goals of unsupervised learning is to learn the disentangled representations of observed data, or to identify original latent codes that generate observations (i.e. identifiability). Bridging the theoretical and practical gap of rigorous identifiability, we have proposed iFlow, which directly maximizes the marginal likelihood conditioned on auxiliary variables, establishing a natural framework for recovering original independent sources. In theory, our contribution provides a rigorous way to achieve identifiability and hence the recovery of the joint distribution between observed and latent variables that leads to principled disentanglement. Extensive empirical studies confirm our theory and showcase its practical advantages over previous methods.

\subsubsection*{Acknowledgments}
We thank Xiaoyi Yin for helpful initial discussions. \\
\textbf{This work is in loving memory of \emph{Kobe Bryant} (1978-2020) ...}

\bibliography{iclr2020_conference}
\bibliographystyle{iclr2020_conference}

\clearpage
\appendix
\section{Appendix}

\subsection{Ablation Study on Activations for Natural Parameters}
\label{ablation_appendix}
Figure 4 demonstrates the comparison of MCC of iFlows implemented with different nonlinear activations for natural parameters and that of iVAE, in which relu+eps denotes the ReLU activation added by a small value (e.g. 1e-5) and sigmoid$\times5$ denotes the Sigmoid activation multiplied by 5.
\begin{figure}[h] 
\centering
\label{fig:ablation}
\includegraphics[width=0.9\textwidth]{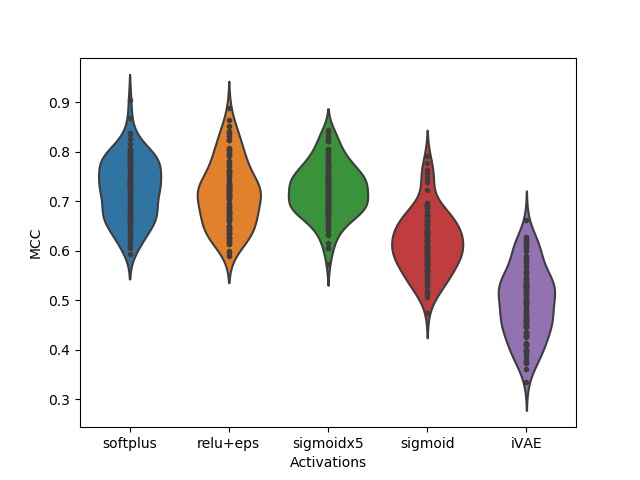}
\caption{Comparison of MCC of iFlows implemented with different nonlinear activations for natural parameters and that of iVAE (best viewed in color).}
\end{figure}

\subsection{Implementation Details of iVAE}
\label{ivae_imple_appendix}
As stated in Section 5.4, to evaluate iVAE's identifying performance, we use the original implementation that is officially released with the same settings as described in~\citep{khemakhem2019variational}. Specifically, in terms of hyperparameters of iVAE, the functional parameters of the decoder and the inference model, as well as the conditional prior are parameterized by MLPs, where the dimension of the hidden layers is chosen from \{50, 100, 200\}, the activation function is a leaky RELU or a leaky hyperbolic tangent, and the number of layers is chosen from \{3, 4, 5, 6\}. 
Here we report all averaged MCC scores of different implementations for iVAE as shown in Table 1.
\begin{table}
\label{table:diff_ivae}
\centering
\caption{Averaged MCC scores versus different iVAE implementations}
\begin{tabular}{cc|c}
\toprule[1.5pt]
NUM\_HIDDENS & NUM\_LAYERS & AVG MCC\\
\midrule
50                 &                       3       &                $0.4964(\pm 0.0723)$ \\
50                 &                      4         &              $0.4782(\pm 0.0740)$ \\
50                  &                      5         &              $0.4521(\pm 0.0719)$ \\
50                   &                     6          &             $0.4276(\pm 0.0686)$ \\
\midrule
100                   &                    3           &            $0.4965(\pm 0.0693)$ \\
100                    &                   4            &           $0.4696(\pm 0.0685)$ \\
100                     &                  5             &          $0.4555(\pm 0.0736)$ \\
100                      &                 6              &         $0.4253(\pm 0.0795)$ \\
\midrule
200                       &                3        &               $0.4961(\pm 0.0753)$ \\
200                        &               4         &              $0.4723(\pm 0.0707)$ \\
200                         &              5          &             $0.4501(\pm 0.0686)$ \\
200                          &             6           &            $0.4148(\pm 0.0703)$ \\
\bottomrule[1.5pt]
\end{tabular}
\label{table_abla_nof}
\end{table}

Table 1 indicates that adding more layers or more hidden neurons does not improve MCC score, precluding the possibility that expressive capability is not the culprit of iVAE inferior performance. Instead, we argue that the assumption (i) of Theorem 3 in \citep{khemakhem2019variational} (i.e the family of approximate posterior distributions contains the true posterior) often fails or is hard to satisfy in practice, which is one of the major reasons for the inferior performance of iVAE. Additionally, Figure 2(b) demonstrates that the energy value of iFlow is much higher than that of iVAE, which provides evidence that optimizing the evidence lower bound, as in iVAE, leads to suboptimal identifiability. 

\newpage
\subsection{Visualization of 2D Cases}
\begin{figure}[h] 
\label{vis_appendix}
\centering
\subfigure[]{
\label{appvis_a}
\includegraphics[width=0.46\textwidth]{src/7.jpg}
}
\subfigure[]{
\label{appvis_b}
\includegraphics[width=0.46\textwidth]{src/86.jpg}
}
\subfigure[]{
\label{appvis_e}
\includegraphics[width=0.46\textwidth]{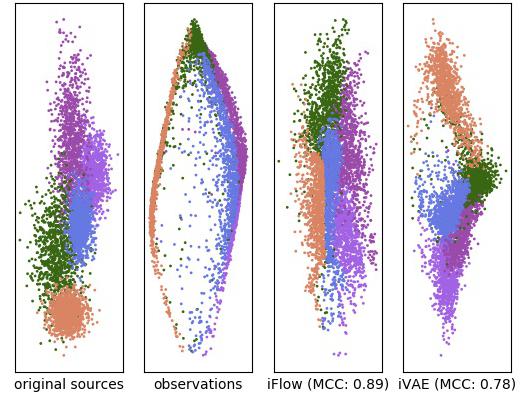}
}
\subfigure[]{
\label{appvis_f}
\includegraphics[width=0.46\textwidth]{src/77.jpg}
}
\caption{Visualization of 2-D cases (i) (best viewed in color).}
\end{figure}

\begin{figure}[t] 
\label{vis_appendix}
\centering
\subfigure[]{
\label{appvis_a}
\includegraphics[width=0.46\textwidth]{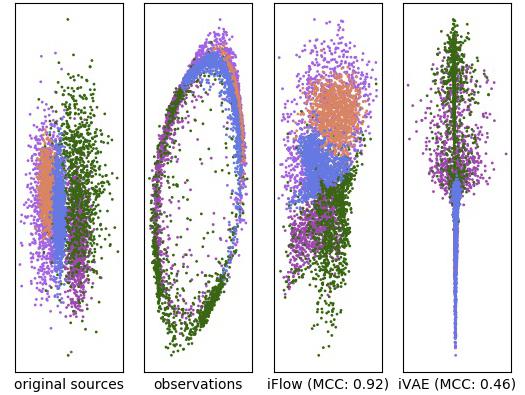}
}
\subfigure[]{
\label{appvis_b}
\includegraphics[width=0.46\textwidth]{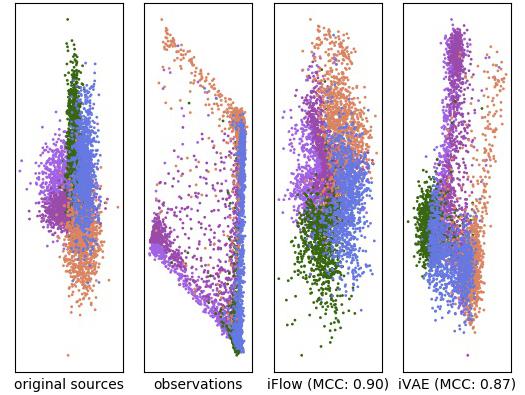}
}
\subfigure[]{
\label{appvis_c}
\includegraphics[width=0.46\textwidth]{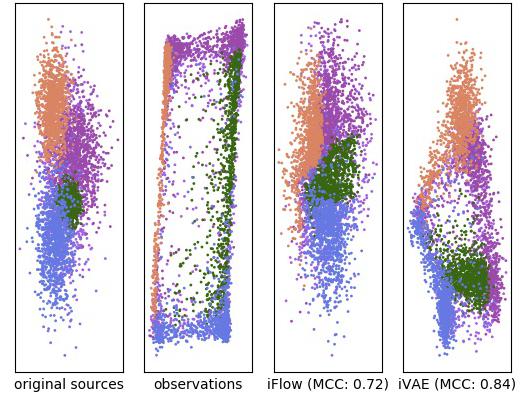}
}
\subfigure[]{
\label{appvis_d}
\includegraphics[width=0.46\textwidth]{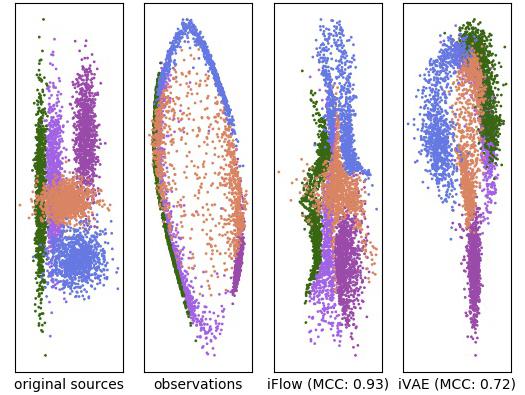}
}
\subfigure[]{
\label{appvis_e}
\includegraphics[width=0.46\textwidth]{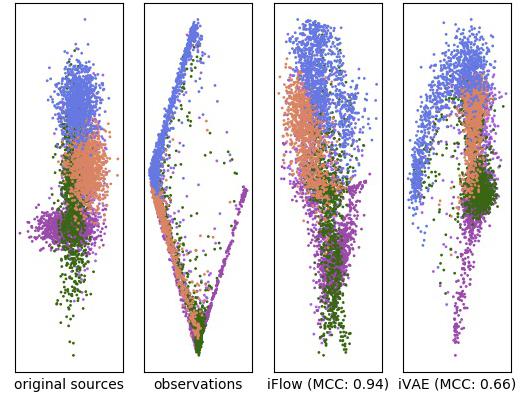}
}
\subfigure[]{
\label{appvis_f}
\includegraphics[width=0.46\textwidth]{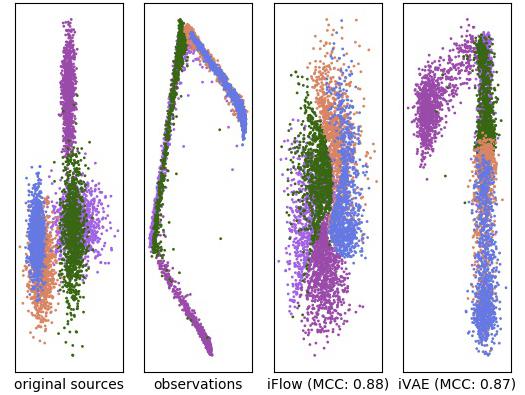}
}
\subfigure[]{
\label{appvis_g}
\includegraphics[width=0.46\textwidth]{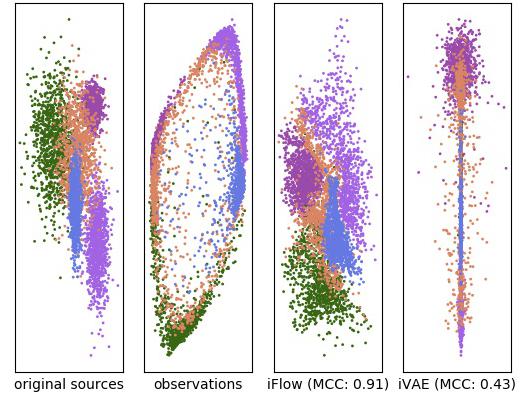}
}
\subfigure[]{
\label{appvis_h}
\includegraphics[width=0.46\textwidth]{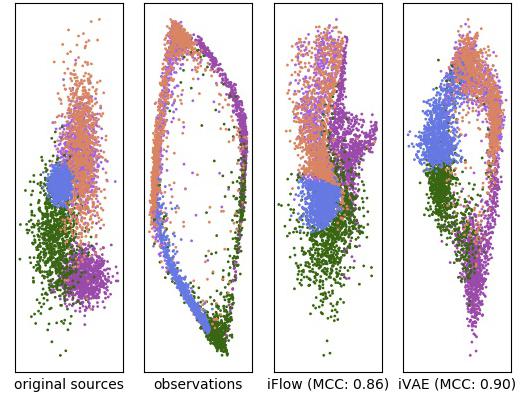}
}
\caption{Visualization of 2-D cases (ii) (best viewed in color).}
\end{figure}

\begin{figure}[t] 
\label{vis_appendix}
\centering
\subfigure[]{
\label{appvis_a}
\includegraphics[width=0.46\textwidth]{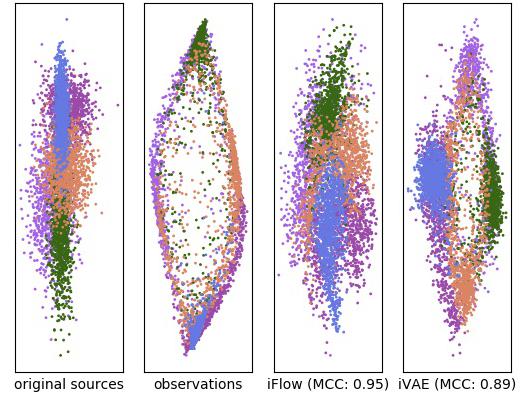}
}
\subfigure[]{
\label{appvis_b}
\includegraphics[width=0.46\textwidth]{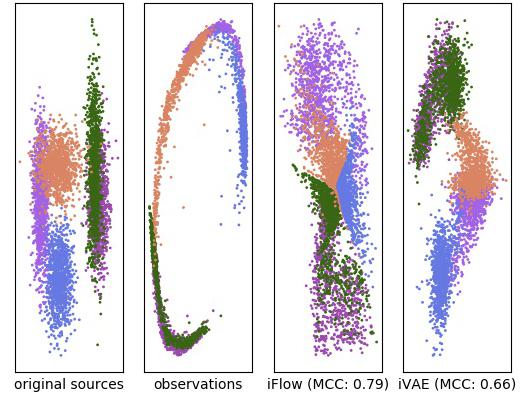}
}
\subfigure[]{
\label{appvis_c}
\includegraphics[width=0.46\textwidth]{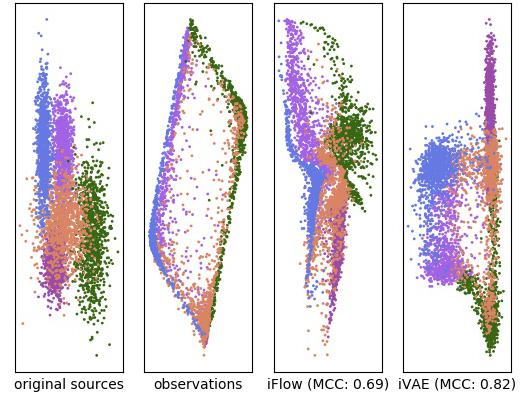}
}
\subfigure[]{
\label{appvis_d}
\includegraphics[width=0.46\textwidth]{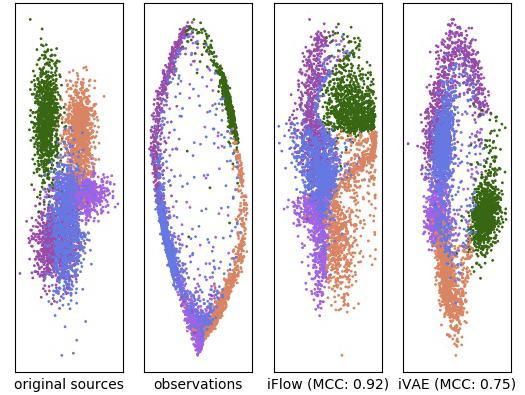}
}
\subfigure[]{
\label{appvis_e}
\includegraphics[width=0.46\textwidth]{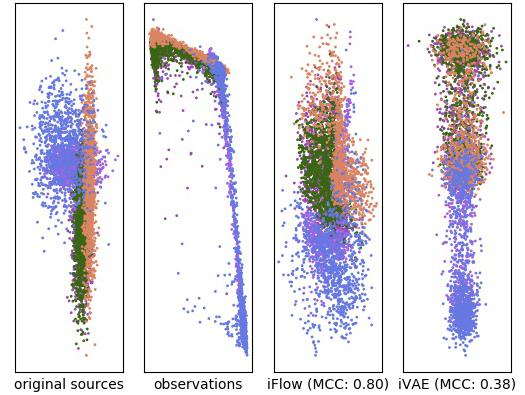}
}
\subfigure[]{
\label{appvis_f}
\includegraphics[width=0.46\textwidth]{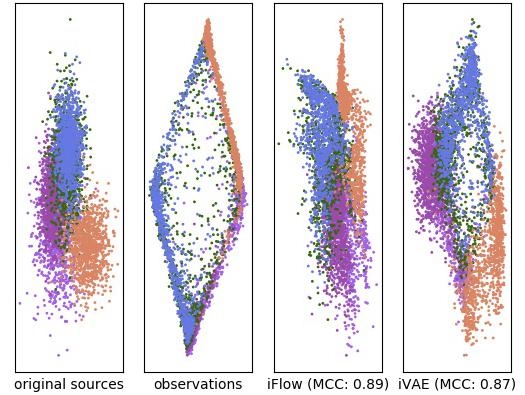}
}
\subfigure[]{
\label{appvis_g}
\includegraphics[width=0.46\textwidth]{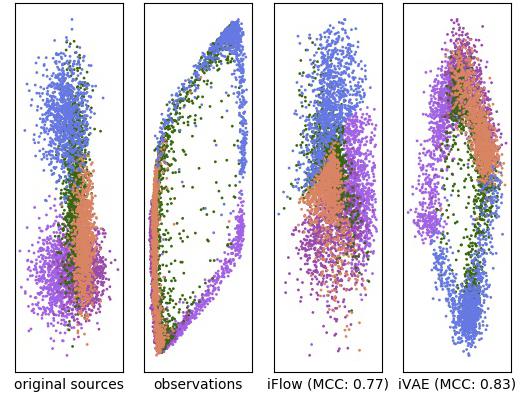}
}
\subfigure[]{
\label{appvis_h}
\includegraphics[width=0.46\textwidth]{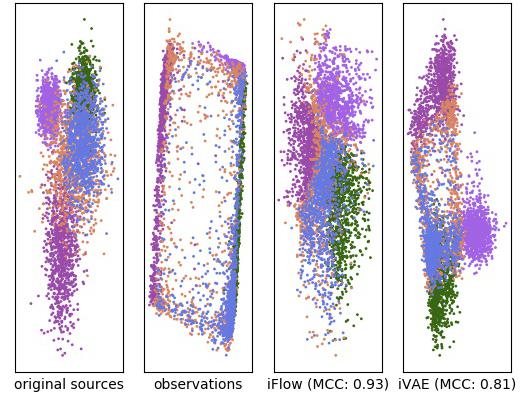}
}
\caption{Visualization of 2-D cases (iii) (best viewed in color).}
\end{figure}

\begin{figure}[t] 
\label{vis_appendix}
\centering
\subfigure[]{
\label{appvis_a}
\includegraphics[width=0.46\textwidth]{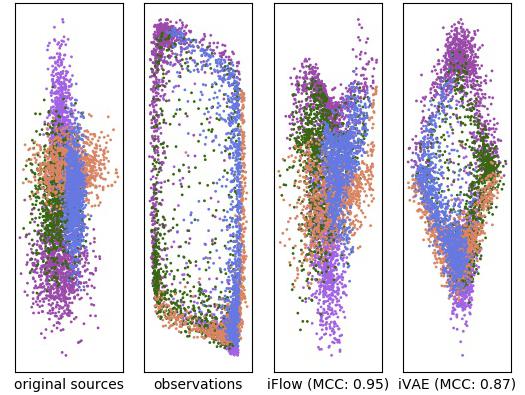}
}
\subfigure[]{
\label{appvis_b}
\includegraphics[width=0.46\textwidth]{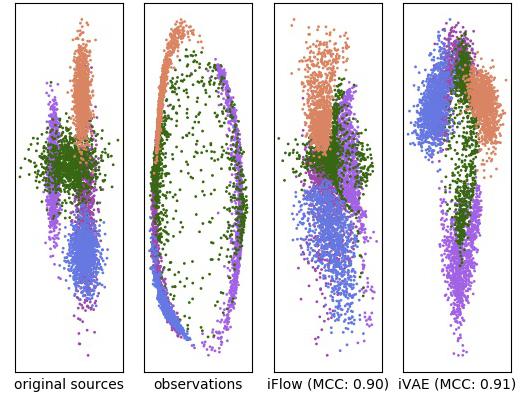}
}
\subfigure[]{
\label{appvis_c}
\includegraphics[width=0.46\textwidth]{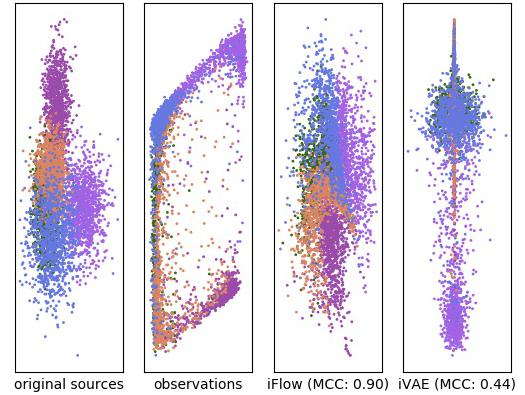}
}
\subfigure[]{
\label{appvis_d}
\includegraphics[width=0.46\textwidth]{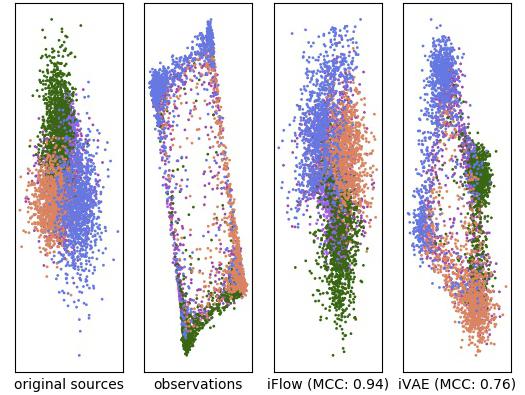}
}
\subfigure[]{
\label{appvis_e}
\includegraphics[width=0.46\textwidth]{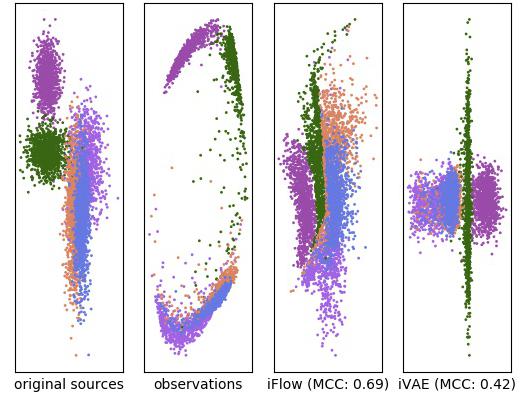}
}
\subfigure[]{
\label{appvis_f}
\includegraphics[width=0.46\textwidth]{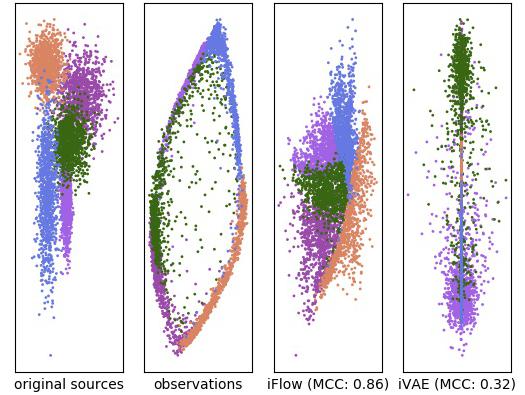}
}
\subfigure[]{
\label{appvis_g}
\includegraphics[width=0.46\textwidth]{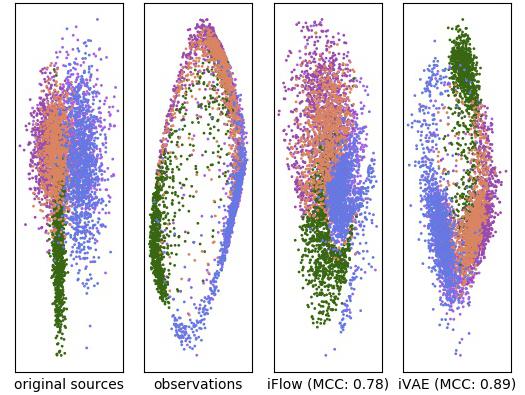}
}
\subfigure[]{
\label{appvis_h}
\includegraphics[width=0.46\textwidth]{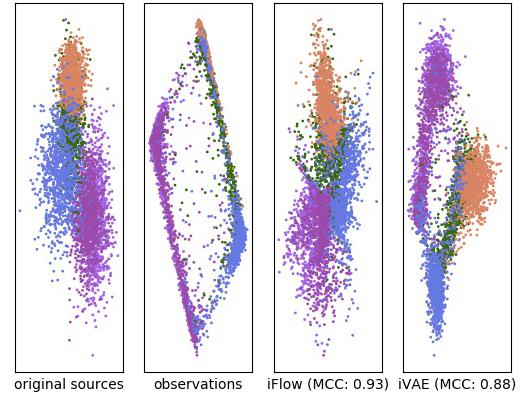}
}
\caption{Visualization of 2-D cases (iv) (best viewed in color).}
\end{figure}

\end{document}